\newtheorem{remark}{Remark}
\newtheorem{lemma}{Lemma}
\newtheorem{theorem}{Theorem}
\def\rb0{r_0^{\flat}}
\def\ND{\mathcal{N}}
\newcommand{\bb}[1]{\boldsymbol{#1}}
\renewcommand{\hat}[1]{\widehat{#1}}
\renewcommand{\Gamma}{\varGamma}
\renewcommand{\Pi}{\varPi}
\renewcommand{\Sigma}{\varSigma}
\renewcommand{\Delta}{\varDelta}
\renewcommand{\Lambda}{\varLambda}
\renewcommand{\Psi}{\varPsi}
\renewcommand{\Phi}{\varPhi}
\renewcommand{\Theta}{\varTheta}
\renewcommand{\Omega}{\varOmega}
\renewcommand{\Xi}{\varXi}
\renewcommand{\Upsilon}{\varUpsilon}
\def\Var{\operatorname{Var}}
\def\R{I\!\!R}
\def\E{I\!\!E}
\def\P{I\!\!P}
\def\kappa{\varkappa}
\def\uv{\bb{u}}
\def\vv{\bb{v}}
\def\dLb12{T_h^{\flat}(\theta_1^{\flat}, \theta_2^{\flat})}
\def\alphab{\alpha^{\flat}}
\def\alphab12{\alpha^{\flat}(\theta, \theta_0)}
\def\chib12{\chi^{\flat}(\theta, \theta_0)}
\def\Lb0{L^{\flat}(\theta_0)}
\def\L0{L(\theta_0)}
\newcommand{\vertiii}[1]{{\left\vert\kern-0.25ex\left\vert\kern-0.25ex\left\vert #1 
    \right\vert\kern-0.25ex\right\vert\kern-0.25ex\right\vert}}
\def\vv{\bb{v}}
\begin{document}

\title{Unsupervised non-parametric change point detection in quasi-periodic signals}

\author{Nikolay Shvetsov \and Nazar Buzun \and Dmitry V. Dylov \\ \affiliations
  Skolkovo Institute of Science and Technology\\Bolshoy blvd. 30/1, Moscow, 121205, Russia\\ \{nikolay.shvetsov, n.buzun, d.dylov\}@skoltech.ru} 


\maketitle

\begin{abstract}
We propose a new unsupervised and non-parametric method to detect change points in intricate quasi-periodic signals. 
The detection relies on optimal transport theory combined with topological analysis and the bootstrap procedure.
The algorithm is designed to detect changes in virtually any harmonic or a partially harmonic signal and is verified on three different sources of physiological data streams. We successfully find abnormal or irregular cardiac cycles in the waveforms for the six of the most frequent types of clinical arrhythmias using a single algorithm. The validation and the efficiency of the method are shown both on synthetic and on real time series. 
Our unsupervised approach reaches the level of performance of the supervised state-of-the-art techniques. 
We provide conceptual justification for the efficiency of the method and prove the convergence of the bootstrap procedure theoretically.
\end{abstract}

\section{Introduction}

New analytical approaches to the quasi-periodic signals with irregular rhythms~\cite{MAGLAVERAS1998191} -- such as those encountered in the electrocardiograms (ECG) -- are in major demand caused by the growth of the physiological monitoring market and by the inflating stock of consumer wearable solutions today. The abundance of unannotated time series data created by these modalities attracts notable theoretical effort in the search for the most robust change point detection (CPD) method, capable of operating in a fast, a model-agnostic, and an unsupervised manner. 

Indispensable with cardiovascular diseases (the top cause of mortality and the major life threat in adults worldwide), ECG has become the most frequently used clinical modality, attracting a multidisciplinary effort to detect conceivable markers of the heart problems in the recordings of the electrical function of the heart. Typical ECG is a one-dimensional time series measurement, close to a periodic signal, with each period consisting of three main parts: P wave, QRS complex, and T wave. Both the shapes and the temporal distribution of the PQRST waves carry important clinical information.  Among many things, the ECG modality allows detecting disruptions in the cardiac rhythm, being the major proxy for the doctors to diagnose heart arrhythmias~\cite{Guidi}. 

Mathematically, cardiac arrhythmias correspond to some degree of broken periodicity in the ECG data stream, having much in common with the many other quasi-periodic signals met in the nature~\cite{MAGLAVERAS1998191}. The six of the most frequent types of clinical arrhythmias are atrial flutter, atrial fibrillation, supraventricular tachycardia, premature atrial contraction, and ventricular rhythms~\cite{UNC}. Each of these conditions is defined by a different set of morphologic and temporal characteristics in the PQRST complex in the ECG signals.
 
In this work, we were motivated to develop a single, model-agnostic, unsupervised algorithm to detect all of such arrhythmias in a binary classification scenario, focusing on high detection specificity. Thanks to the non-parametric construction of the proposed change point statistic, we ensure zero modeling bias and applicability to a wide range of the incoming quasi-periodic signals, including the physiological ones. Besides ECG, we demonstrate the efficient application of the proposed algorithm to find abnormal rhythms in neuronal spiking streams and in periodic limb tremor data in patients with Parkinson's disease.

The formal problem statement is the following. Let $X_t$ be the quasi-periodic signal with a period $T$.  One has to test the hypotheses 
\begin{eqnarray}\begin{gathered}
\mathcal{H}_0:\{X_t \sim \P_{f_0(t/T)}, \; \forall t \in [0, n]\} \\ 
\mathcal{H}_1: \{\exists \tau^*: X_t \sim \P_{f_0(t/T)} \; ; X_t \sim \P_{f_1(t)} \;  \}\\ t \in [0, \tau^*] \quad \text{  and  } \quad t \in [\tau^*, n]
\end{gathered}\end{eqnarray}
In the notation above $\P$ represents a probability distribution, $n$ is the dataset size, $\tau^*$ is the change point time, $f_0(t/T)$ and $f_1(t)$ are the functions parametrizing the distributions.

Many groups have considered the problem of arrhythmia detection in the frameworks of modern machine learning methods~\cite{MAGLAVERAS1998191,Guidi}. Such approaches as Decision Trees, Random Forests, Support Vector Machines (SVM), Naive Bayes, and the Convolutional Neural Networks (CNN) were shown to be efficient for different supervised tasks with the corresponding pros and cons~\cite{Guidi}. Jun et.al. ~\cite{Tae} proposed methods of arrhythmia detection with the neural networks, allowing to achieve good values of recall. Further development included methods based on Genetic Algorithms~\cite{Finlay} and logistic regression ~\cite{Kawazoe}, both of which are now very popular for building the supervised classification models.

A limited number of unsupervised ECG analysis works have also appeared. In \cite{unsuper}, the authors describe unsupervised methods for feature extraction and clustering to prevent the false alarm in arrhythmia  detection. In \cite{unsupervised}, the authors describe application of k-means algorithms and a neural network to the ECG signal analysis. The application of topological data analysis (TDA) and Wasserstein metrics~\cite{Sommerfeld} for periodical signals was discussed in works by Perea~\cite{Perea,Perea1,Perea2}. Some selective  algorithms for the offline detection of multiple change points in multivariate time series are presented in \cite{nonparamreview,nonpar} and the CPD via Gaussian processes in~\cite{avanesov2019nonparametric}. The work by Buzun and Avanesov~\cite{Buzun} describes bootstrap application for CPD for the time series. Techniques of Gaussian approximation for the OT task are described in different works by Buzun ~\cite{Buzun1} and Chernozhukov ~\cite{Chernozhukov,Chernozhukov1,Chernozhukov2}.

The major difficulty in the statistical study of the problem (1) is twofold: the dependent data and the lack of a suitable parametric model for an intricate signal, such as ECG. To address these challenges, we propose a new pipeline shown in Figure \ref{fig:pipeline}. In the proposed algorithm, we resort to the optimal transport (OT) approach that is capable of building a non-parametric change point statistic to test the hypotheses. We propose to apply the TDA/OT approach not to the original signal, but to a projection of the quasi-periodic function into a closed curves space (the point cloud), allowing both the periodic and the morphologic components of the original signal's waveform to be considered. Eventually, we estimate quantiles of the change point statistic with the bootstrap procedure in order to set a threshold under the null hypotheses assumption. In the theoretical section, we prove a theorem about the convergence of the bootstrap distribution of the statistic to the real distribution, setting a foundation stone for a plethora of possible future works on TDA/OT analysis on periodic signals.

\begin{figure}[h]
     \centering
     \includegraphics[width=\columnwidth]{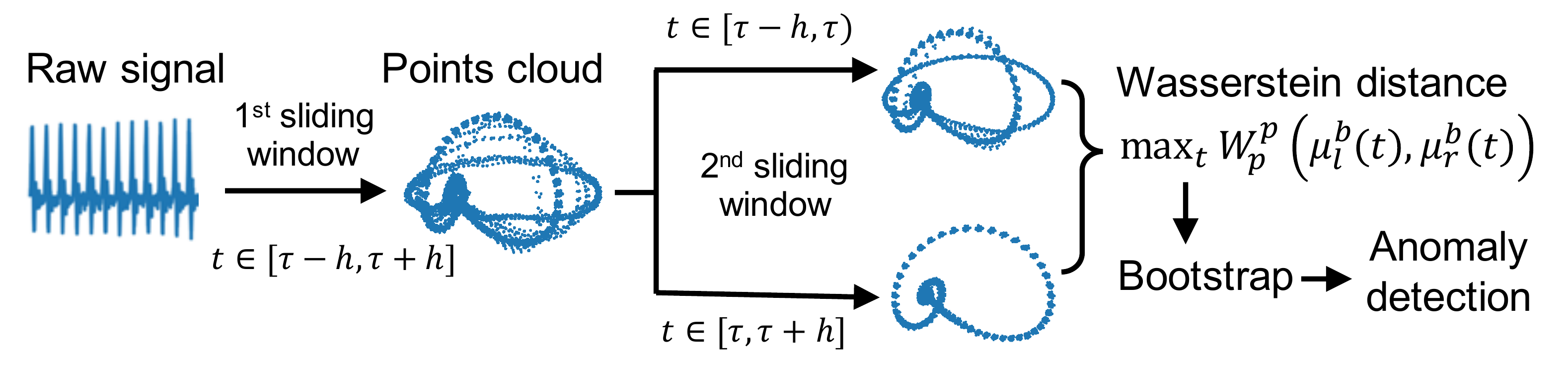} 
     \caption{Pipeline of the proposed algorithm, where $\tau$ -- the second sliding window center, $2h$ -- the second sliding window size, $W_p^p$ -- Wasserstein distance, $\mu_l^b(t)$, $\mu_r^b(t)$ -- Bootstrap measures in the left and the right parts of the second sliding window.}
     \label{fig:pipeline}
     \end{figure}
\FloatBarrier 


\section{Methodology}
\subsection{The first window: calculate point clouds}
The first step of our approach is to map the original time series into the point cloud. We use the method based on the sliding windows with 1-dimensional persistence scoring described in \cite{Perea}.
The main idea is to present the original periodical signal as a closed curve, which will help us to apply the optimal transport formalism to the quasi-periodic data.
Define
\begin{equation}
SW(t)=\left[X_t, \\ X_{t+s}, \\...\\, X_{t+Ms}\right]
\end{equation}
The sliding window ($SW$) makes an embedding of the signal $X_t$  at point $t$ into $\R^{M+1}$.
Iterating through different values of $t$ with a step $\Delta t$ one gets a collection of points called  sliding window point cloud of  $X_t$ (Figure \ref{fig:wass_boot_108}). A critical parameter for this embedding is the first window-size ($M s$). It is chosen to be equal to the duration of a single period in the signal (e.g., one PQRST cycle in the normal heart beat pattern).
In the next step we apply Principal Component Analysis (PCA) for this point cloud in order to increase robustness and have a possibility to visualize the rhythm disturbance. Two examples of the point clouds are shown in Figure \ref{fig:wass_boot_108}.

\begin{figure}[h]
     \centering     \includegraphics[width=\columnwidth, height=5cm]{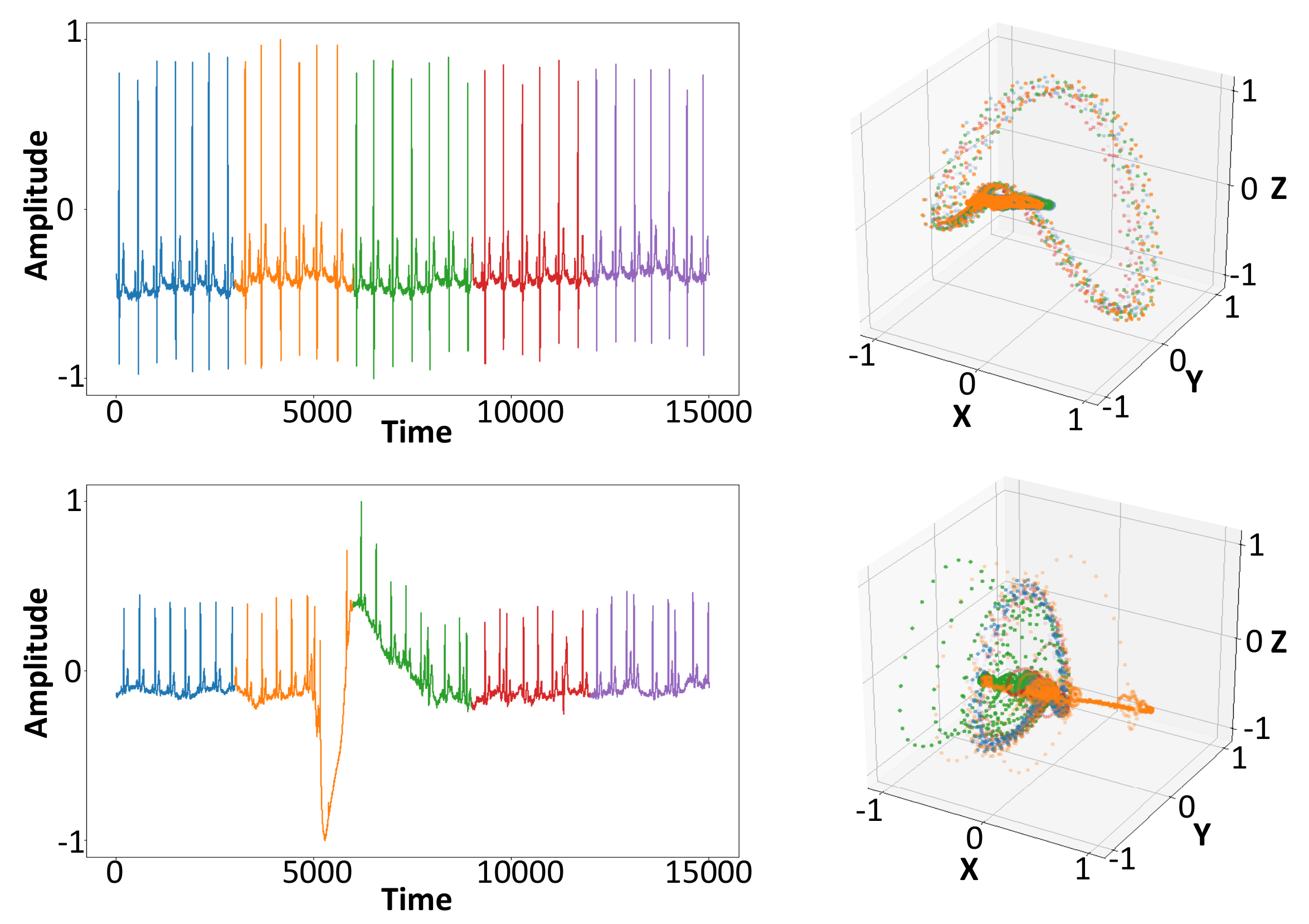}
     \caption{Point clouds of normal heart rhythm (top) and Atrial Flutter (bottom), with $\{\text{x,y,z}\}$ corresponding to the top tree PCA components. Colors help to visualize irregularities in the cloud.}
     \label{fig:wass_boot_108}
     \end{figure}
\FloatBarrier

\subsection{The second window: get Wasserstein distances} 
In order to find structural changes in the point cloud corresponding to the structural changes in the original time series we elaborate the method described in \cite{Buzun}. The main idea is that at each time step the procedure extracts a data slice from the point cloud, splits it in two equal-size parts, and computes Wasserstein distance between them. The size of the sliding window could be equal to several curve loops. The method avoids the rise of values of the Wasserstein distances due to fluctuations and neglects noise-driven changes in the curves, effectively tracing only the meaningful structural changes in the signal.

Wasserstein distance is defined on probability distribution pairs on some metric space~\cite{Sommerfeld}. By definition, the Wasserstein distance of degree $p$ between the probability measures $\mu$ and $\nu$ is
\begin{equation}
\begin{gathered}
W^p_p(\mu,v)=\left( \inf_{\gamma \in \Pi(\mu,v)} \int_{M*M} \| x -y \|^p d\gamma(x,y)\right) ^{1/p}
\end{gathered}\end{equation}
Lets introduce a change point statistic (the maximum distance over the window positions): 
\begin{eqnarray}
 T(2h)=\max_{\tau}W_p^p\big(\mu_l(\tau), \mu_r(\tau)\big) \qquad\\
\mu_l(\tau)=\frac{1}{h}\sum_{t=\tau-h}^{\tau-1}\delta _{X_{t}},\quad\mu_r(\tau)=\frac{1}{h}\sum_{i=\tau}^{\tau+h-1}\delta_{X_{t}} 
\end{eqnarray}

where $\delta_{X_t}$ is the Dirac function at position $X_t$ (i.e., a unit mass concentrated at location $X_t$), $\tau$ is the central point of the sliding window of length $2h$, implying that the data series within the sliding window is $(X_{\tau-h},\ldots,X_{\tau+h-1})$.

We calculate the Wasserstein distance for each position of the sliding window and create a new time series to be used for showing how the curves differ inside of the window. In practice, one can calculate Wasserstein distances via the Sinkhorn algorithm using the Optimal Transport Library~\cite{sinkhorn}.

\subsection{Moving blocks bootstrap for rhythm analysis}
In this step, we compute Wasserstein distance and execute the bootstrap procedure:
\begin{eqnarray}
 T^b(2h)=\max_{\tau}W_p^p\big(\mu_l^b(\tau), \mu_r^b(\tau)\big) \qquad\\
\mu_l^b(\tau)=\frac{1}{h}\sum_{t=\tau-h}^{\tau-1}\delta _{X_{k(t)}},\quad\mu_r^b(\tau)=\frac{1}{h}\sum_{i=\tau}^{\tau+h-1}\delta_{X_{k(t)}} 
\end{eqnarray}
where the set $k(t)$ is generated by the Moving Block Bootstrap (MBB), and where the data is split and shuffled into $n$ blocks randomly. Naturally, we assume that the points located in the peaks of the plot of the Wasserstein distances correspond to the arrhythmia points on the original periodic signal. 
MBB was formulated in separate works by Künsch~\cite{Kunsch} and Lahiri~\cite{Lahiri} as new scheme to create pseudo-samples. The usual bootstrap forms new samples taking only random observations from the initial sample, whereas, the MBB performs this procedure only within a row of the formed blocks. We use a weighted block structure of the MBB, which generates random weights for each block and, importantly, preserves the structure of the original time series.

After the MBB resampling,  we create a list of change point statistic values ($T^b(2h)$)  and set the threshold with $\alpha$ confidence level corresponding to the border between the normal points and the points of arrhythmia (see Figure \ref{fig:em_ts_3}). It is assumed that quantiles of $T^b(2h)$ are close to the quantiles of $T(2h)$ (bootstrap consistency), which we justify in the theoretical part below.    

  \begin{figure}[h]
     \centering
     \includegraphics[width=\columnwidth]{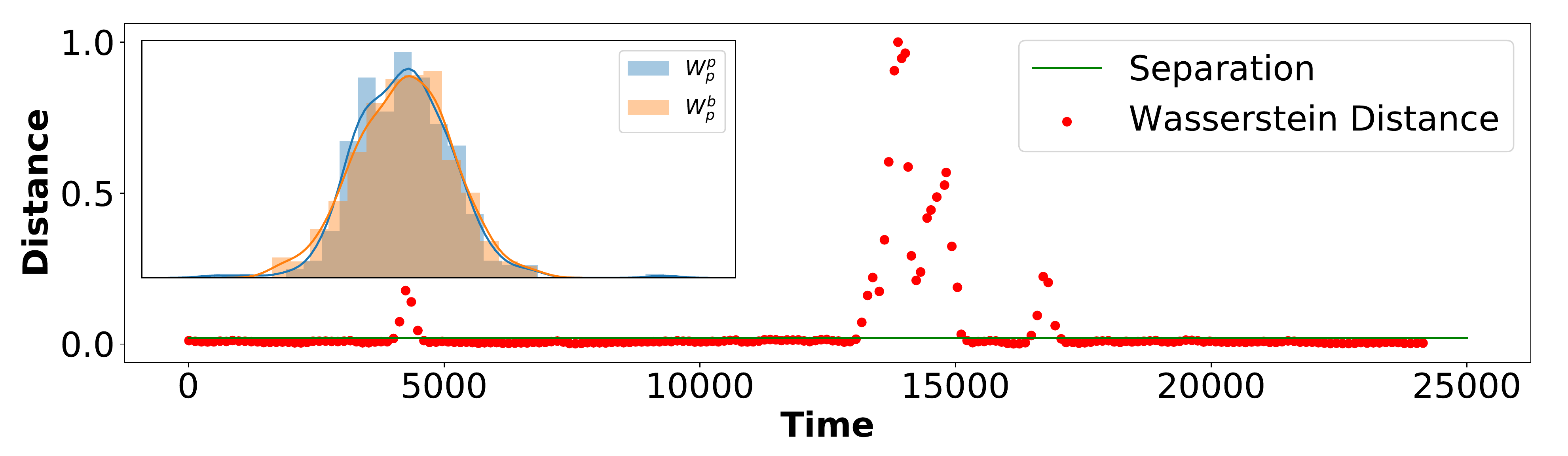}
     \caption{Values of Wasserstein distances (bootstrap) help detect abnormal rhythm in unsupervised manner. 
     }
     \label{fig:em_ts_3}
     \end{figure}
     
\subsection{Gaussian approximation}

Consider two point clouds of size $h$. They may belong to the same distribution (null hypothesis) or to different distributions. Assume that samples in each point cloud are independent (when we use block-bootstrap we may assume that blocks are independent). Bootstrap consistency requires that the distribution of the Wasserstein distances between these point clouds should be close to the distribution of the resampled ones. One necessary technique in the Bootstrap consistency proof is the Gaussian approximation. It appears that the Wasserstein distance between two point clouds under the null hypothesis can be approximated by the maximum of some Gaussian vector. In \cite{Sommerfeld}, this approximation is proved for the case of discrete distributions as a limit theorem.
\begin{theorem} 
Let measures $\boldsymbol{r}$, $\boldsymbol{s}$ are defined on a discrete set $\mathcal{X} = \{x_1,\ldots, x_N\}$ and  i.i.d. samples $X_{1}, \ldots, X_{h} \sim \boldsymbol{r}$ and $Y_{1}, \ldots, Y_{h} \sim \boldsymbol{s}$. Define convex sets:

\begin{eqnarray}
\Phi_{p}= \left\{\boldsymbol{u} \in \mathbb{R}^{N}: u_{x}-u_{x^{\prime}}\leq d^{p}\left(x, x^{\prime}\right),\quad x, x^{\prime}\in\mathcal{X}\right\}\\
\resizebox{1\hsize}{!}{$
\Phi_{p}^{*}(\boldsymbol{r}, \boldsymbol{s})=\left\{(\boldsymbol{u}, \boldsymbol{v}) \in \mathbb{R}^{\boldsymbol{X}} \times \mathbb{R}^{\boldsymbol{X}}: \begin{array}{l}{\langle\boldsymbol{u}, \boldsymbol{r}\rangle+\langle\boldsymbol{v}, \boldsymbol{s}\rangle= W_{p}^{p}(\boldsymbol{r}, \boldsymbol{s})} \\ {u_{x}+v_{x^{\prime}} \leq d^{p}\left(x, x^{\prime}\right), x, x^{\prime} \in \mathcal{X}}\end{array}\right\}$}
\end{eqnarray}

\noindent Multinominal covariance matrix $\Sigma(\boldsymbol{r})$ is

\begin{equation}
\begin{gathered}
\left[\begin{array}{cccc}{r_{x_{1}}\left(1-r_{x_{1}}\right)} & {-r_{x_{1}} r_{x_{2}}} & {\cdots} & {-r_{x_{1}} r_{x_{N}}} \\ {-r_{x_{2}} r_{x_{1}}} & {r_{x_{2}}\left(1-r_{x_{2}}\right)} & {\dots} & {-r_{x_{2}} r_{x_{N}}} \\ {\vdots} & {} & {\ddots} & {\vdots} \\ {-r_{x_{N}} r_{x_{1}}} & {-r_{x_{N}} r_{x_{2}}} & {\cdots} & {r_{x_{N}}\left(1-r_{x_{N}}\right)}\end{array}\right]\end{gathered}\end{equation}
such that with Gaussian random vector $Z \sim \ND(0, \Sigma(\boldsymbol{r}))$  it holds for empirical measures $\widehat{\boldsymbol{r}}_h$ and~$\widehat{\boldsymbol{s}}_h$:

\noindent\textbf{1) One sample - Null hypothesis} 
\begin{equation}
n^{\frac{1}{2 p}} W_{p}\left(\hat{\boldsymbol{r}}_{h}, \boldsymbol{r}\right)       \overset{d}{\longrightarrow} \left\{\max _{\boldsymbol{u} \in \Phi_{p}} \boldsymbol{u}^T  Z \right\}^{\frac{1}{p}}
\end{equation}

\noindent\textbf{2) One sample - Alternative} 

\begin{equation}
n^{\frac{1}{2}}\left(W_{p}\left(\hat{\boldsymbol{r}}_{h}, \boldsymbol{s}\right)-W_{p}(\boldsymbol{r}, \boldsymbol{s})\right) 
\overset{d}{\longrightarrow}\end{equation}$$\frac{1}{p} W_{p}^{1-p}(\boldsymbol{r}, \boldsymbol{s})\left\{\max _{(\boldsymbol{u}, \boldsymbol{v}) \in \Phi_{p}^{*}(\boldsymbol{r}, \boldsymbol{s})} \boldsymbol{u}^T Z \right\}$$

\noindent\textbf{3) Two samples - Null hypothesis.} If \textbf{r}=\textbf{s} and $h$ is approaching infinity such that $h \rightarrow \infty$ and $h \rightarrow \lambda \in(0,1)$, then:
\begin{equation}
\left( \frac{n m}{n+m} \right )^{\frac{1}{2 p}} W_{p}\left(\hat{\boldsymbol{r}}_{h}, \hat{\boldsymbol{s}}_{h}\right) 
\overset{d}{\longrightarrow} 
\left\{\max _{\boldsymbol{u} \in \Phi_{p}} \boldsymbol{u}^T  Z  \right\}^{\frac{1}{p}}
\end{equation}

 \noindent\textbf{4) Two samples - Alternative} With n and m approaching infinity such that $h \rightarrow \infty$ and $h \rightarrow \lambda \in[0,1]$, then:
\begin{equation}
\left( \frac{h^2}{2h} \right )^{\frac{1}{2 p}} \left(W_{p}\left(\hat{\boldsymbol{r}}_{h}, \hat{\boldsymbol{s}}_{h}\right)-W_{p}(\boldsymbol{r}, \boldsymbol{s})\right) \overset{d}{\longrightarrow}
\end{equation}
$$
\frac{1}{p} W_{p}^{1-p}(\boldsymbol{r}, \boldsymbol{s})\left\{\max _{(\boldsymbol{u}, \boldsymbol{v}) \in \Phi_{p}^{*}(\boldsymbol{r}, \boldsymbol{s})} \sqrt{\lambda} \boldsymbol{u}^T  Z  +\sqrt{1-\lambda} \boldsymbol{v}^T  Z_s \right\}
$$

\end{theorem}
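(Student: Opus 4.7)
The plan is to combine the linear-programming dual of discrete optimal transport with a multinomial central limit theorem, and then apply a Hadamard-style delta method to lift Gaussian convergence through the non-linear functional $\mu \mapsto W_p(\mu, \cdot)$. All four statements follow from a single template once the directional derivative of $W_p^p$ is identified correctly; the case split between null and alternative only affects the rate and the shape of the optimal dual set.

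First I would invoke LP duality. Since $\mathcal{X}$ is finite, the Kantorovich problem is a finite linear program with strong duality, so
\begin{equation*}
W_p^p(\mu,\nu) = \max_{(u,v)\in\Phi_p^*(\mu,\nu)} \bigl(\langle u,\mu\rangle + \langle v,\nu\rangle\bigr),
\end{equation*}
and by the $c$-transform one may restrict to $v_x = \min_{x'}\bigl(d^p(x,x') - u_{x'}\bigr)$, reducing the one-sample case to a linear maximum over the polytope $\Phi_p$. Next, the multinomial CLT gives $\sqrt{h}(\hat{\boldsymbol{r}}_h - \boldsymbol{r}) \overset{d}{\longrightarrow} Z \sim \mathcal{N}(0,\Sigma(\boldsymbol{r}))$, since the empirical proportions are sample averages of i.i.d.\ one-hot indicator vectors with the claimed multinomial covariance. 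For the two-sample setting one gets a joint Gaussian limit, and under the null $\boldsymbol{r}=\boldsymbol{s}$ the scaled difference $\sqrt{hm/(h+m)}(\hat{\boldsymbol{r}}_h - \hat{\boldsymbol{s}}_m)$ converges to a single centered Gaussian with covariance $\Sigma(\boldsymbol{r})$.

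The decisive step is a delta method applied to $F(\mu) := W_p^p(\mu,\nu)$. Because $F$ is a maximum of linear functionals over the compact polytope $\Phi_p^*(\mu,\nu)$, it is convex and Hadamard directionally differentiable with derivative given by a Danskin formula
\begin{equation*}
F'(\mu;\Delta) = \max_{(u,v)\in\Phi_p^*(\mu,\nu)} \langle u,\Delta\rangle.
\end{equation*}
Under the null (parts 1 and 3) we have $F(\boldsymbol{r}) = 0$ and the optimal-dual set collapses so that $v = -u$ on the support of $\boldsymbol{r}$, forcing $u \in \Phi_p$ and giving the form $\max_{u\in\Phi_p} u^{\T} Z$; taking the $p$-th root recovers the rate $h^{1/(2p)}$. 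Under the alternative (parts 2 and 4), $W_p(\boldsymbol{r},\boldsymbol{s})>0$, so the map $x \mapsto x^{1/p}$ is smooth at $W_p^p(\boldsymbol{r},\boldsymbol{s})$ and the chain rule inserts the pre-factor $\tfrac{1}{p}W_p^{1-p}(\boldsymbol{r},\boldsymbol{s})$; the $\sqrt{h}$ rate now applies to $W_p$ itself. Part 4 then follows by applying the delta method to the joint two-sample Gaussian, producing the asymmetric combination $\sqrt{\lambda}\,u^{\T} Z + \sqrt{1-\lambda}\,v^{\T} Z_s$ from the independence of the two empirical measures.

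The hard part will be justifying the delta-method step at the degenerate null point, where $F'(\boldsymbol{r};\cdot)$ is non-linear in $\Delta$. The classical Fr\'echet delta method does not apply, so one must appeal to the extension of Shapiro and D\"umbgen for tangentially differentiable convex maps. This in turn requires continuity of the argmax correspondence for a perturbed LP, which follows from Walkup--Wets stability, together with an explicit verification that $\Phi_p^*(\boldsymbol{r}+\Delta_h,\boldsymbol{r}) \to \Phi_p^*(\boldsymbol{r},\boldsymbol{r})$ in the Painlev\'e--Kuratowski sense. A smaller technical point is that $\Delta$ must lie in the tangent space to the probability simplex, which is automatic since $\sum_x Z_x = 0$ almost surely under the multinomial CLT.
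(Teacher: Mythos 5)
The paper does not actually prove Theorem~1: it is quoted verbatim from Sommerfeld and Munk, and the sentence preceding it says so (``In \cite{Sommerfeld}, this approximation is proved for the case of discrete distributions as a limit theorem''). The paper's own contribution starts with Theorem~\ref{wasd_gar}, which extends the discrete result to the continuous case and supplies a rate. So there is no internal proof to compare against; what I can do is assess your sketch against the source.

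Your outline is the Sommerfeld--Munk argument, and it is essentially right: finite LP strong duality for the Kantorovich problem; a multinomial CLT for $\sqrt{h}(\widehat{\boldsymbol{r}}_h-\boldsymbol{r})$ (joint Gaussian in the two-sample case); and then the Hadamard-directional-differentiability extension of the delta method (Shapiro, D\"umbgen) applied to the convex piecewise-linear value function $\mu\mapsto W_p^p(\mu,\nu)$, whose directional derivative is the Danskin maximum over the optimal dual face $\Phi_p^*$. You correctly isolate the degenerate null as the place where the classical delta method fails and a directional version is required, and you correctly explain the two rates: under the null $F(\boldsymbol{r})=0$ so the limit is the $p$-th root of the directional derivative, giving the $h^{1/(2p)}$ scaling, while under the alternative the smooth chain rule through $t\mapsto t^{1/p}$ at $W_p^p(\boldsymbol{r},\boldsymbol{s})>0$ supplies the factor $\tfrac{1}{p}W_p^{1-p}(\boldsymbol{r},\boldsymbol{s})$. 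Two points worth making explicit: the collapse $v=-u$ (and hence the identification of $\Phi_p^*(\boldsymbol{r},\boldsymbol{r})$ with $\Phi_p$) only holds on the support of $\boldsymbol{r}$, so the statement tacitly assumes $\boldsymbol{r}$ has full support on $\mathcal{X}$; and the dual feasible polytope is unbounded along the all-ones direction, but since $\sum_x Z_x=0$ almost surely that direction contributes nothing to $\boldsymbol{u}^\T Z$, which is the observation needed to make the Danskin maximum well defined.
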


Below, we will extend the result from \cite{Sommerfeld} for continuous case and estimate the resulting convergence rate. We will need two lemmas for that. 

\begin{lemma}\label{max_gar} \cite{Chernozhukov3}\cite{Buzun2}
Let independent samples $X_{1}, \ldots, X_{h} \in \R^{p}$ be centered random vectors. Their Gaussian counterparts are $Y_{i} \sim \ND \left(0, \Var\left[X_{i}\right]\right)$.  Denote their sum by 
\[
S^X =  \frac{1}{\sqrt{h}} \sum_{i=1}^h X_i, 
\quad
S^Y = \frac{1}{\sqrt{h}} \sum_{i=1}^h Y_i
\]
Assume $\exists b>0$ such that for all $ j \in 1 \ldots p$:   $\E (S^X_j)^2 \geq b$  and   $\exists G_{h} \geq 1$ such that for $k \in\{1,2\}$
\begin{equation}\begin{gathered}
\frac{1}{h} \sum_{i=1}^{h} \E\left[\left|X_{i j}\right|^{2+k}\right] \leq G_{h}^{2+k} \end{gathered}\end{equation}
\begin{equation}\begin{gathered}
\E \left[\exp \left(\frac{\left|X_{i j}\right|}{G_{h}}\right)\right] \leq 2 
\end{gathered}\end{equation} Then for a set $A$ of hyper-rectangle form
\begin{eqnarray}
\sup _{A } \left|\P\left\{S^{X} \in A\right\}-\P\left\{S^{Y} \in A\right\}\right|\nonumber \leq O \left(\frac{G_{h}^{2} \log ^{7}(p h)}{h}\right)^{1 / 6}
\end{eqnarray}
\end{lemma}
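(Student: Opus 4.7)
The plan is to follow the Stein/Lindeberg telescoping strategy combined with log--sum--exp smoothing and Gaussian anti-concentration, which is the standard route to high-dimensional CLT bounds on hyper-rectangles à la Chernozhukov--Chetverikov--Kato. First I would reduce the hyper-rectangle class to one-sided rectangles of the form $\{x\in\R^{p}:x_j\le a_j,\,j=1,\dots,p\}$ by doubling the coordinates (replacing $X_{ij}$ by $(X_{ij},-X_{ij})$), so that $\P\{S^{X}\in A\}$ is expressed as the expectation of an indicator $\Ind\{\max_j(S^{X}_{j}-a_j)\le 0\}$.

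Next I would replace the indicator of a half-space by a $C^{3}$ surrogate built from the soft-max $F_{\beta}(x)=\beta^{-1}\log\sum_{j}\exp(\beta x_{j})$, which satisfies $0\le F_{\beta}(x)-\max_j x_j\le \beta^{-1}\log p$ and whose derivatives obey $\|\partial^{k}F_{\beta}\|_{1}\lesssim \beta^{k-1}$. Composing with a smooth cutoff $\phi\in C^{3}(\R)$ then yields a test function $f_{\beta,\phi}$ whose $\ell^{1}$-norm of derivatives up to order three is controlled by the smoothness parameters of $\phi$ and $\beta$.

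The core of the argument is the Lindeberg telescope: define the hybrid sums $Z^{(k)}=h^{-1/2}(X_{1}+\cdots+X_{k}+Y_{k+1}+\cdots+Y_{h})$ and write
\begin{equation*}
\E f_{\beta,\phi}(S^{X}) - \E f_{\beta,\phi}(S^{Y})
=\sum_{k=1}^{h}\bigl\{\E f_{\beta,\phi}(Z^{(k)})-\E f_{\beta,\phi}(Z^{(k-1)})\bigr\}.
\end{equation*}
Expanding each term by Taylor's formula of order three around the common part $Z^{(k)}-h^{-1/2}X_k=Z^{(k-1)}-h^{-1/2}Y_k$ and using that $X_k$ and $Y_k$ share mean and covariance, the first two orders cancel and only the third-order remainder survives. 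That remainder is controlled by $\|\partial^{3}f_{\beta,\phi}\|_{1}$ against $h^{-3/2}\bigl(\E|X_{kj}|^{3}+\E|Y_{kj}|^{3}\bigr)$, so the moment hypothesis $h^{-1}\sum_{i}\E|X_{ij}|^{3}\le G_h^{3}$ gives a bound of order $\beta^{2}G_h^{3}h^{-1/2}$ after summing over $k$. The sub-exponential condition $\E\exp(|X_{ij}|/G_h)\le 2$ is used to truncate $X_{kj}$ at level $G_h\log(ph)$ so that higher-moment contributions remain under control with only logarithmic losses.

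The final step is to convert the smooth-function bound back to a bound on hyper-rectangle probabilities by Nazarov-type anti-concentration: because $\E(S^{Y}_{j})^{2}\ge b$, the Gaussian maximum $\max_{j}(S^{Y}_{j}-a_j)$ satisfies $\P\{|\max_{j}(S^{Y}_{j}-a_j)|\le \varepsilon\}\lesssim \varepsilon\sqrt{\log p}/\sqrt{b}$, which lets one replace the indicator by $f_{\beta,\phi}$ at a cost of order $\beta^{-1}\log p\cdot\sqrt{\log p}/\sqrt{b}$. Optimizing $\beta$ between this smoothing gap and the Taylor remainder $\beta^{2}G_h^{3}h^{-1/2}\log^{C}(ph)$ yields precisely the stated exponent $1/6$ and rate $\bigl(G_h^{2}\log^{7}(ph)/h\bigr)^{1/6}$. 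I expect the main obstacle to be this three-way balancing (smoothing error vs.\ Taylor remainder vs.\ truncation tail) together with careful tracking of all the $\log(ph)$ factors produced by anti-concentration and by bounding $\|\partial^{k}F_{\beta}\|_{1}$; any looseness at one of these steps degrades the final exponent, which is why such proofs typically defer to the Chernozhukov--Chetverikov--Kato machinery cited here.
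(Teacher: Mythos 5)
Lemma~\ref{max_gar} is not proved in the paper itself; it is imported with citations to \cite{Chernozhukov3} and \cite{Buzun2}, and the argument lives in those references. Your sketch correctly reproduces the Chernozhukov--Chetverikov--Kato route taken there: coordinate doubling to one-sided rectangles, log-sum-exp smoothing composed with a $C^{3}$ cutoff, a Lindeberg telescoping sum whose first two Taylor orders cancel by moment matching, truncation at level $G_{h}\log(ph)$ under the sub-exponential tail, Nazarov anti-concentration on the Gaussian side, and a final optimization of the smoothing bandwidth $\beta$ producing the $1/6$ exponent; this is a faithful outline of the cited proof. One caution on the closing step: a literal balance of the smoothing gap $\beta^{-1}\log^{3/2}p$ against a third-order remainder of size $\beta^{2}G_{h}^{3}h^{-1/2}$, which is what the stated moment bound $h^{-1}\sum_{i}\E|X_{ij}|^{3}\le G_{h}^{3}$ feeds in directly, gives a rate of order $\bigl(G_{h}^{6}\log^{6}p/h\bigr)^{1/6}$ rather than the stated $\bigl(G_{h}^{2}\log^{7}(ph)/h\bigr)^{1/6}$. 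To recover the $G_{h}^{2}$ power and the extra logarithm, the cited arguments route the sub-exponential truncation and the variance lower bound $\E(S^{X}_{j})^{2}\ge b$ through the telescoping estimate in place of the raw third moment, so that only effectively two powers of the envelope survive; tracking this, together with the $\log(ph)$ factors emitted by $\|\partial^{k}F_{\beta}\|_{1}$, the truncation tail, and the anti-concentration constant, is precisely the bookkeeping obstacle you flag at the end of your sketch and is where the real work of the proof is located.
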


\begin{lemma}[Anti-concentration] \label{aconc}\cite{Chernozhukov}
 Let $\mathcal{F}\subset \mathcal{L}^2(P)$ be a separable class of measurable functions and entropy of $\mathcal{F}$ be finite. Denote by $G(f)$, $f \in \mathcal{F}$ a Gaussian random process with zero mean and covariance depended on measure $P$:  
 \begin{equation}
  \E[G(f)G(g)] = \int f(x) g(x) d P(x)
 \end{equation}
 Suppose that there exist constants $\underline{\sigma}$, $\overline{\sigma}>0$ such that\\ $\underline{\sigma}^2\leq \E f^2 \leq\overline{\sigma^2}$ for all $f \in \mathcal{F}$. Then $\forall$  $x$ and $\Delta > 0$
 \begin{equation}
\P \left( \sup _{ f \in \mathcal{F} } G(f) \in  [x, x + \Delta] \right) \leq C_{A} \Delta
 \end{equation}
where
\begin{equation}
C_{A} = 
O \left(\E\left[\sup _{f \in \mathcal{F}} G(f)\right]+\sqrt{1 \vee \log (\underline{\sigma} / \Delta )}\right)
\end{equation}

\end{lemma}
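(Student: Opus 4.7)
The statement is the Chernozhukov-Chetverikov-Kato anti-concentration inequality for the supremum of a Gaussian process indexed by a separable function class. My plan is to reduce the continuous-index case to a finite-dimensional one by separability, apply the standard Nazarov-type density bound for the maximum of a Gaussian vector, and then pass to the limit using the entropy hypothesis.

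\textbf{Step 1 (Reduction to finite-dimensional maxima).} By separability of $\mathcal{F}$, choose a countable dense subset $\{f_j\}_{j\geq 1}\subset\mathcal{F}$. The finite entropy assumption gives sample-path continuity of $G$ via Dudley, hence $\sup_{f\in\mathcal{F}}G(f)=\sup_{j\geq 1}G(f_j)$ almost surely. Setting $M_N=\max_{j\leq N}G(f_j)$, one has $M_N\uparrow \sup_{f\in\mathcal{F}}G(f)$ a.s., so it suffices to bound $\sup_{x}\P(M_N\in[x,x+\Delta])$ uniformly in $N$ and take limits.

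\textbf{Step 2 (Density bound for $M_N$).} The crux is to prove that $M_N$ has a density $p_N$ on $\R$ satisfying
\[
\|p_N\|_{\infty}\leq C\bigl(\E M_N+\sqrt{1\vee\log(\sigml/\Delta)}\bigr).
\]
Two ingredients enter. First, Borell-Sudakov-Tsirelson concentration applied to the $1$-Lipschitz function $\max$ yields
\[
\P\bigl(|M_N-\E M_N|\geq t\bigr)\leq 2\exp\bigl(-t^2/(2\sigmu^2)\bigr),
\]
so $M_N$ is sub-Gaussian around its mean at scale $\sigmu$. Second, each coordinate $G(f_j)\sim\ND(0,\sigma_j^2)$ with $\sigma_j\geq\sigml$ has density bounded by $(2\pi\sigml^2)^{-1/2}$. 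Following Nazarov's construction one smooths the indicator $\Ind\{x\leq M_N\leq x+\Delta\}$ by a narrow Gaussian kernel, applies Gaussian integration by parts (equivalently Stein's identity / Ornstein-Uhlenbeck semigroup), and truncates at level $t\asymp \sigmu\sqrt{\log(\sigml/\Delta)}$. Balancing the two contributions produces exactly the form of $C_A$ stated in the lemma.

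\textbf{Step 3 and main obstacle.} For the limit, the finite-entropy assumption makes $\E\sup_{f\in\mathcal{F}}G(f)$ finite (Dudley), monotone convergence gives $\E M_N\to \E\sup G$, and weak convergence of $M_N$ to $\sup G$ transfers the anti-concentration bound to the process. The delicate step is the density estimate in Step 2: obtaining the sharp $\sqrt{1\vee\log(\sigml/\Delta)}$ factor rather than a crude $1/\sigml$ requires a careful joint choice of smoothing scale and truncation radius, exploiting Gaussian log-concavity; this is essentially the content of Nazarov's lemma, while the separability-plus-entropy wrapper around it is routine.
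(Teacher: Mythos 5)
Your proposal follows essentially the same route as the paper: discretize $\mathcal{F}$ (you via a countable dense subset and nested maxima, the paper via an $\epsilon$-net), invoke the Chernozhukov--Chetverikov--Kato/Nazarov anti-concentration bound for the finite-dimensional Gaussian maximum, and pass to the limit using sample-path continuity guaranteed by the entropy/Dudley hypothesis. The only difference is that you sketch a proof of the finite-dimensional Nazarov bound (Borell concentration plus Gaussian integration-by-parts) rather than citing it, as the paper does, but the skeleton of the argument is identical.
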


\begin{proof}

  Basing on V. Chernozhukov's work \cite{Chernozhukov1}, \cite{Chernozhukov2} maximum of a Gaussian vector $Z$ has the following anti-concentration 
\begin{equation}
\begin{gathered}
\P \left(\max _{1 \leq j \leq p} Z_{j}  \in  [x, x + \Delta] \right)\\ \leq \Delta \, O \left( \E \max _{1 \leq j \leq p} Z_{j}  +\sqrt{1 \vee \log (\underline{\sigma} / \Delta)}\right)
\end{gathered}
\end{equation}
Make a finite $\epsilon$-net $\{ f_1, f_2, \ldots \}$ for $\mathcal{F}$ and set $Z_j$ equals to value of $G$ in the center of $j$-th cell $G(f_j)$,\\ such that $\forall$ $j$
and $\| f - f_j \|_P \leq \epsilon$, $\epsilon \to 0$
\begin{equation}
 \P ( | Z_j - G(f) |  > \delta ) \to 0
\end{equation}
and subsequently
\begin{equation}
 \P \left( \max_{j, \| f - f_j \|_P \leq \epsilon} | Z_j - G(f) |  > \delta \right) \to 0
\end{equation}
and
\begin{equation}
 \max_j Z_j \overset{Pr}{\longrightarrow} \max_f G(f) 
\end{equation}
Note that convergence by probability yields convergence by distribution, so
\begin{equation}
\P \left(\sup_f  G(f) \in  [x, x + \Delta] \right) \to 
\P \left(\max_j  Z_j \in  [x, x + \Delta] \right)
\end{equation}
and 
\begin{equation}
\E \sup _{f} G(f) \to \E \max_j  Z_j
\end{equation}

\end{proof}

\begin{remark}

The original proof one may find in Lemma A.1 from article \cite{Chernozhukov}.
We have used finite entropy assumption in the previous Lemma because it ensures the  existence of process $G(f)$ according to Dudley’s
criterion for sample continuity of Gaussian processes.

\end{remark}
\noindent Denote 
\[
\Phi_p = \{(u,v): | u(x) + v(y) | \leq \| x - y \|^p, \, \forall x, \forall y \in R^d  \}
\]
\[
\P(\varphi_1, \varphi_2) = \P \left( \sqrt{h} \max_{(\uv, \vv) \in \Phi_p} \langle \uv,  \phi_1 \rangle + \langle \vv, \phi_2 \rangle  > x \right)
\]\[
\P_\varepsilon(\varphi_1, \varphi_2) = \P \left( \sqrt{h} \max_{(\uv, \vv) \in \epsilon\text{-net}(\Phi_p)} \langle \uv,  \phi_1 \rangle + \langle \vv, \phi_2 \rangle  > x \right)
\]

\begin{theorem} 
\label{wasd_gar}
Consider i.i.d. samples $X = \{ X_{1}, \ldots, X_{h} \}$ and $Y = \{Y_{1}, \ldots, Y_{h} \}$ with a bounded support space $\Omega$ of dimension $d$.  Exist  Gaussian vectors $Z_1, Z_2 \in \ND(0, \Sigma_\psi) $ and generalized Fourier basis $\{\psi_i\}_{i=1}^{\infty}$, such that  
\begin{equation}
\Sigma_\psi =  \E \psi  \psi^T(X_1)
\end{equation}
and the Wasserstein distance between the samples can be approximated by the maximum of Gaussian process with the following upper bound
\begin{equation}\begin{gathered}
\left |
\P \bigg(  \sqrt{h} W^p_{p}(X, Y)  > x \bigg) - \P \left( Z_1^T \psi,  Z_2^T \psi \right)  \right | \\
\leq  C_A O \left(  \frac{\log h}{ h } \right)^{\frac{1 }{6 + 7d/p}},\end{gathered}
\end{equation}
where $C_A$ can be written as $$
O \left(\E  \max_{(\uv, \vv) \in \Phi_p} \langle \uv,  Z_1^T \psi \rangle + \langle \vv,  Z_2^T \psi \rangle  + \sqrt{1 \vee \log (\underline{\sigma} / \mu_3 )}\right)$$  
\end{theorem}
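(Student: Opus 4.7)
The plan is to pass from the primal formulation of $W_p^p$ to its Kantorovich dual, expand the dual potentials and empirical measures in the basis $\{\psi_i\}$, and then combine the Gaussian approximation of Lemma~\ref{max_gar} with the anti-concentration bound of Lemma~\ref{aconc}. First I would write
\[
\sqrt{h}\,W_p^p(\hat{\boldsymbol{r}}_h,\hat{\boldsymbol{s}}_h)
=\max_{(\uv,\vv)\in\Phi_p}\Bigl\{\langle\uv,\sqrt{h}\,\hat\phi_1\rangle+\langle\vv,\sqrt{h}\,\hat\phi_2\rangle\Bigr\},
\]
where $\hat\phi_1=h^{-1}\sum_i\psi(X_i)$ and $\hat\phi_2=h^{-1}\sum_i\psi(Y_i)$ are the vectors of empirical Fourier coefficients in the basis $\{\psi_i\}$, and the pairing is understood through the basis representation $u(x)=\sum_i u_i\psi_i(x)$. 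Under the null hypothesis these vectors are centered, so $\sqrt h\,\hat\phi_1$ and $\sqrt h\,\hat\phi_2$ are exactly the sums $S^X$ from Lemma~\ref{max_gar} applied to $\psi(X_i)$.

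Next I would truncate $\Phi_p$ to a finite-dimensional $\epsilon$-net. Because the support $\Omega\subset\R^d$ is bounded and elements of $\Phi_p$ are uniformly $p$-Hölder (via the constraint $|u(x)+v(y)|\le\|x-y\|^p$), standard metric-entropy estimates for Hölder balls give $\log|\mathcal{N}_\epsilon|=O(\epsilon^{-d/p})$. On this net the maximum becomes a maximum over a vector of dimension $P=|\mathcal{N}_\epsilon|$, so Lemma~\ref{max_gar} applies and yields
\[
\sup_{A\,\text{hyper-rect.}}\bigl|\P(\sqrt h\,\hat\phi_j\in A)-\P(Z_j^{T}\psi\in A)\bigr|
\leq O\!\left(\frac{G_h^{2}\log^{7}(Ph)}{h}\right)^{1/6}.
\]
Because the max over an $\epsilon$-net is a maximum of finitely many linear functionals of $\hat\phi_1,\hat\phi_2$ (a hyperrectangle event in the joint vector $(\hat\phi_1,\hat\phi_2)$), this transfers to the Gaussian approximation of $\P_\epsilon(\hat\phi_1,\hat\phi_2)$ against its Gaussian counterpart.

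Then I would pay the discretization cost. The $\epsilon$-net approximation of the continuous maximum introduces an error of order $\epsilon$ inside the probability $\P(\cdot>x)$; by Lemma~\ref{aconc} this converts into an additive error bounded by $C_A\cdot\epsilon$, with $C_A$ as stated in the theorem. Assembling the three pieces,
\[
\bigl|\P(\sqrt h\,W_p^p(X,Y)>x)-\P(\varphi_1,\varphi_2)\bigr|
\leq C_A\,\epsilon \;+\; O\!\left(\frac{\epsilon^{-d/p}\log^{7}h}{h}\right)^{1/6},
\]
and I would optimize in $\epsilon$, which gives the balance $\epsilon\asymp(\log h/h)^{1/(6+7d/p)}$ (the factor $7$ coming from the $\log^{7}$ absorbed into the polynomial balance) and hence the announced rate.

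The main obstacle will be making the $\epsilon$-net step rigorous in the continuous setting: one has to control the $L^2(\P)$ oscillation of $\langle\uv,\psi\rangle+\langle\vv,\psi\rangle$ uniformly over $\Phi_p$ in a way compatible with both the hyperrectangle form required by Lemma~\ref{max_gar} and the anti-concentration constants $\underline\sigma,\overline\sigma$ of Lemma~\ref{aconc}. A secondary but nontrivial point is verifying the moment and exponential conditions of Lemma~\ref{max_gar} for the basis-evaluated variables $\psi(X_i)$ on the bounded support $\Omega$, which gives the constant $G_h$ entering the balance; once this is done, the stated bound and the expression for $C_A$ follow by combining the three estimates above.
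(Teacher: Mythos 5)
Your plan reproduces the paper's proof essentially step for step: pass to the Kantorovich dual, discretize $\Phi_p$ with an $\epsilon$-net controlled by the $p$-H\"older entropy bound $\log N \lesssim \epsilon^{-d/p}$, apply Lemma~\ref{max_gar} to the resulting finite maximum, control the discretization error via Lemma~\ref{aconc}, and optimize $\epsilon$ by balancing $C_A\,\epsilon$ against $\bigl(\epsilon^{-7d/p}\log^{7} h/h\bigr)^{1/6}$ to obtain the exponent $1/(6+7d/p)$. The only cosmetic difference is that the paper builds its net by discretizing the empirical measures (quoting cardinality $h^{N(\Omega,\epsilon)}$) rather than invoking H\"older-ball entropy directly; the extra $\log h$ in that cardinality is absorbed in the balance, and both routes give the same rate.
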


\begin{remark}
From the practical sense, resampling of Wasserstein distance should entail data normalization in order to restrict $\Omega$ and should  keep the power $p$ close to the data dimension $d$.
\end{remark}

\begin{remark}
In combination with Gaussian comparison \cite{Chernozhukov3} one may show the bootstrap consistency, i.e. 
\[
\P \left( Z_1^T \psi,  Z_2^T \psi \right) \approx \P \left( (Z^b)_1^T \psi,  (Z^b)_2^T \psi \right)
\]
and subsequently 
\[
\P \bigg(  \sqrt{h} W^p_{p}(X, Y)  > x \bigg) \approx \P \left( (Z^b)_1^T \psi,  (Z^b)_2^T \psi \right)
\]
From this also follows that $T(2h)$ converges to $T^b(2h)$ by distribution when $h \to \infty$.
\end{remark}

\begin{proof}
The dual formulation of Wasserstein distance is 
\begin{equation}
W_p^p(X, Y) = \max_{(\uv,\vv) \in \Phi_p } \langle u, \phi_X \rangle + \langle v, \phi_Y \rangle\\
\end{equation}
\begin{equation}
\phi_X(x)=\frac{1}{h} \sum_{i=1}^h \delta(x - X_i),\,\,\phi_Y(x)=\frac{1}{h} \sum_{i=1}^h \delta(x - Y_i)
\end{equation}
\noindent Show how the covering number of $\Phi_p$ depends on the support space of empirical measures $\Omega$. Construct an $\varepsilon$-net on empirical measures. Its cardinality is $h^{N(\Omega, \varepsilon)}$ since each  $\varepsilon$-cell of $\Omega$ may contain from $0$ to $h$ points.  For each measures pair  $(\mu_1^{\varepsilon}, \mu_2^{\varepsilon} )$ from  $\varepsilon$-net one may set in correspondence pair $(u_\varepsilon,v_\varepsilon) \in \Phi_p$ such that $(u_\varepsilon,v_\varepsilon) $ is constant inside each cell of $\Omega$ and 
\begin{equation}
W^p_p(\mu_1^{\varepsilon}, \mu_2^{\varepsilon}) = \langle u_\varepsilon, \mu_1^{\varepsilon} \rangle + \langle v_\varepsilon, \mu_2^{\varepsilon} \rangle
\end{equation}
and subsequently for each arbitrary pair of empirical measures $(\mu_1, \mu_2)$ on $\Omega$ there is an element  $(u_\varepsilon,v_\varepsilon) \in \Phi_p$ with property 
\begin{eqnarray}
\langle u_\varepsilon, \mu_1 \rangle + \langle v_\varepsilon, \mu_2 \rangle = \langle u_\varepsilon, \mu_1^{\varepsilon} \rangle + \langle v_\varepsilon, \mu_2^{\varepsilon} \rangle\\
W^p_p(\mu_1, \mu_2) -  \langle u_\varepsilon, \mu_1 \rangle + \langle v_\varepsilon, \mu_2 \rangle =\nonumber\\ W^p_p(\mu_1, \mu_2) - W^p_p(\mu_1^\varepsilon, \mu_2^\varepsilon) \leq 2\varepsilon^p
\end{eqnarray}
Decompose  densities $\varphi_X$, $\varphi_Y$ in $\{\psi_i(x)\}$ basis 
\begin{equation}\begin{gathered}
\langle u, \phi_X \rangle + \langle v, \phi_Y \rangle =\\ \left \langle u, \left( \frac{1}{h} \sum_i \psi(X_i) \right)^T \psi \right\rangle + \left \langle v,  \left( \frac{1}{h} \sum_i \psi(Y_i) \right)^T \psi \right \rangle\end{gathered}
\end{equation}
In order to replace $\{\psi(X_i) \}$ and $\{\psi(Y_i) \}$ by Gaussian vectors  and use anti-concentration one has to make an $\varepsilon$-net approximation of $(u,v)$ functions. We have shown above that the cowering number  of $\Phi_p$  may by restricted by $O(h^{1/\varepsilon^{d/p}})$. So one may set 
\begin{equation}\log p_\varepsilon =  \frac{1}{\varepsilon^{d/p}} \log(h) + O(1)\end{equation}determining the dimension of maximum function. On  $\varepsilon$-net Lemma \ref{max_gar} gives upper bound  
\begin{equation}
\left | \P_\varepsilon(\varphi_X, \varphi_Y) - \P_\varepsilon(Z_1^T \psi,  Z_2^T \psi)  \right| \leq 
O \left(\frac{G_{h}^{2} \log ^{7}(p_\epsilon h)}{h}\right)^{1 / 6}
\end{equation}
where 
\[
G_{h}^{2 + k} =  \E (\uv(X_1) + \vv(Y_1))^{2+k} \leq \E \| X_1 -Y_1 \|^{p (2+k)}  
\]To make a step from $\P_\varepsilon$ to $\P$ remind that functions $u$ and $v$ are $\| \cdot \|^p$- Lipschitz and subsequently 
\[
\max_x | \uv(x) - \uv_{\epsilon}(x) | \leq \varepsilon
\]
and using Lipschitz property with Lemma \ref{aconc} one gets  
\[
| \P_\varepsilon(Z_1^T \psi,  Z_2^T \psi) - \P(Z_1^T \psi,  Z_2^T \psi)  | \leq O (C_A \varepsilon)
\]
\noindent Setting  optimal 
\[
\varepsilon = \left( \frac{1}{C_A h^{1/6}}  \right)^{\frac{1}{1 + 7d/6p}}
\]
gives the initial statement.\end{proof}

\section{Experiments}
In the experimental section we will demonstrate the use of the Theorem 1 on various physiological measurements and on their synthetic models. Of main interest to us is the ECG signal, but the data streams from a wearable sensor that had recorded limb tremor activity in a patient with Parkinson's disease will also be analysed.
\subsection{Real ECG data}
We used the MIT-BIH arrhythmia dataset from the PhysioNet~\cite{MIT}. The MIT-BIH Arrhythmia Dataset contains 48 half-hour excerpts of two-channel ambulatory ECG recordings, studied by the BIH Arrhythmia Laboratory between 1975 and 1979. 23 recordings were chosen at random from a set of 4000 24-hour ambulatory ECG recordings and include most common arrhythmia types. The remaining 25 recordings include less common but clinically significant arrhythmias. Each record contains two 30-min ECG lead signal (mostly MLII lead and lead V1/V2/V4/V5) sampling the data at a frequency of 360Hz. Our algorithm proved to work without any data pre-processing or noise reduction and detected all types of arrhythmia (see results in Table 1).

\subsection{Artificial ECG data}
To visualize the mechanism of arrhythmia detection, and to populate the arrhythmia classes equally, we also developed an auxiliary model to generate artificial ECG. This model can simulate the normal beat and produce different types of arrhythmia at random time moments: Atrial Flutter, Atrial Fibrillation, Supraventricular Tachycardia, Premature Atrial Contraction, and Ventricular Rhythms -- all according to the initialization parameters of the model. The generation is produced via the discrete wavelet transform in the form:
\begin{equation}
T_{m,n}=\int^{\infty}_{-\infty}x(t)\psi_{m,n}(t)dt
\end{equation}
where $\psi_{m,n}$ is the orthonormal wavelet basis. In this work we use Daubechies wavelet, generated with \emph{scipy} library, using hyperparameter different types of arrhythmia were created, also Gaussian noise $\ND(\mu, \sigma)$ was added to provide realistic ECG data.
\subsubsection{Artificial normal rhythm model}
To simulate normal heart beat we used its verbatim definition from the medical textbooks~\cite{UNC}. Normal sinus rhythm is a periodical signal, with the heart rate ranging from 60 to 100 bpm. The QRS complex is normal, the P wave always exists before the QRS, the T wave is visible after the QRS. Wavelets with Gaussian noise result in smooth curves in the point cloud, emphasizing the clean normal cycling beat trajectory (See Figure~\ref{fig:signals_curves}).

 
  \begin{figure}[h]
 \centering
 \includegraphics[width=\columnwidth]{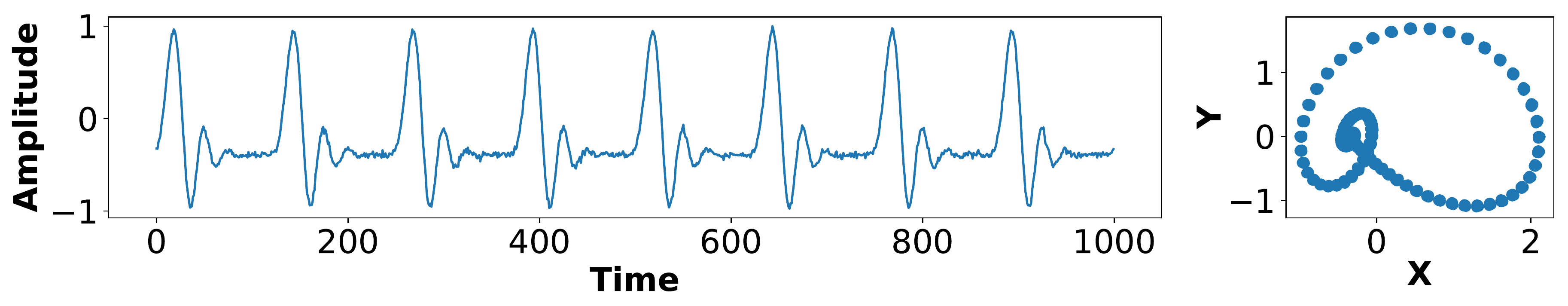}
 \includegraphics[width=\columnwidth]{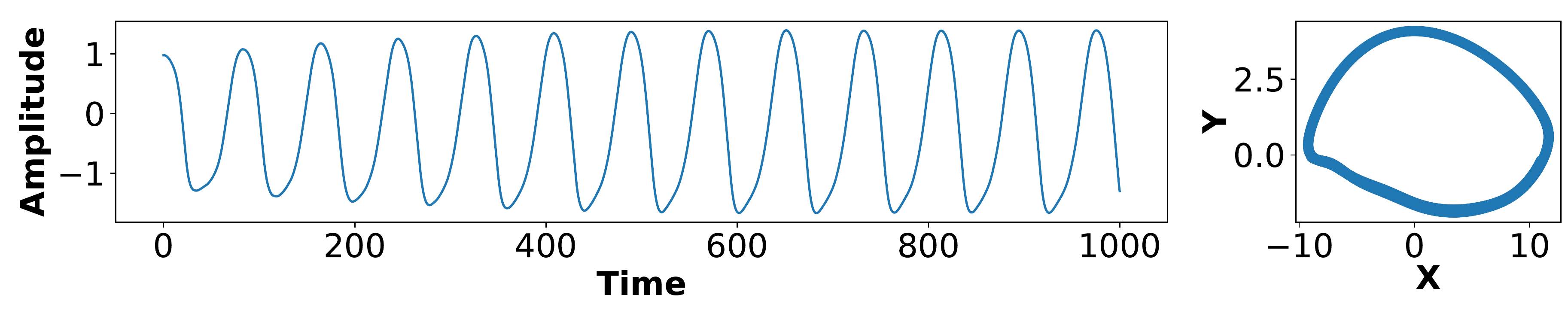}
 \caption{Artificial ECG (top) and spiking neuronal activity from~\cite{krylov2019RL} (bottom) with corresponding point clouds.}
 \label{fig:signals_curves}
 \end{figure}
 
 We consider 5 of the most frequent types of arrhythmia and 1 signal with an unknown random rhythm anomaly, each of them corresponding to some unique PQRST characteristics. Each type of arrhythmia was initiated at a random time moment within a given synthetic time series stream. This was done to understand the performance of the bootstrap detection on the ideal data, to test its stability to the noise (omitted for brevity), and to learn the changes that appear in the point cloud when particular features of a cardiac malady emerge in the signal.\footnote{To the authors' knowledge, this kind of ECG representation -- the point could that automatically boosts the visibility of an abnormal rhythm -- has never been suggested for the clinical use before. We speculate that it could be easily integrated into the physiologic systems to accompany or, perhaps, even to substitute the conventional ECG running monitors. With time, doctors can get accustomed to looking on the cyclic clouds just like they have gotten used to the waveforms of conventional ECG.}  Simulated waveforms of each cardiac arrhythmia (and the corresponding point clouds) were calculated using basic textbook in cardiology~\cite{UNC}; the results are presented in Figure \ref{fig:art_signals}.
 
 \begin{figure}[h]
 \centering
 \includegraphics[width=\columnwidth]{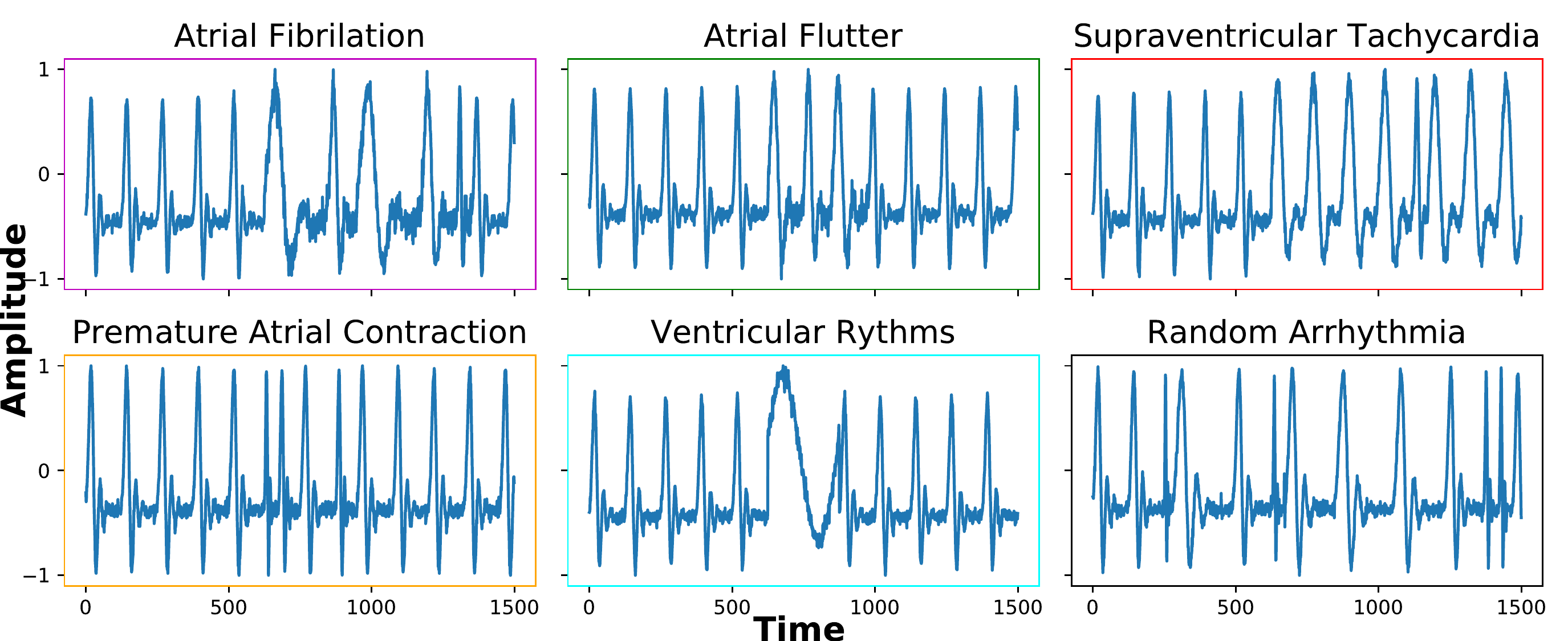}
 \includegraphics[width=\columnwidth]{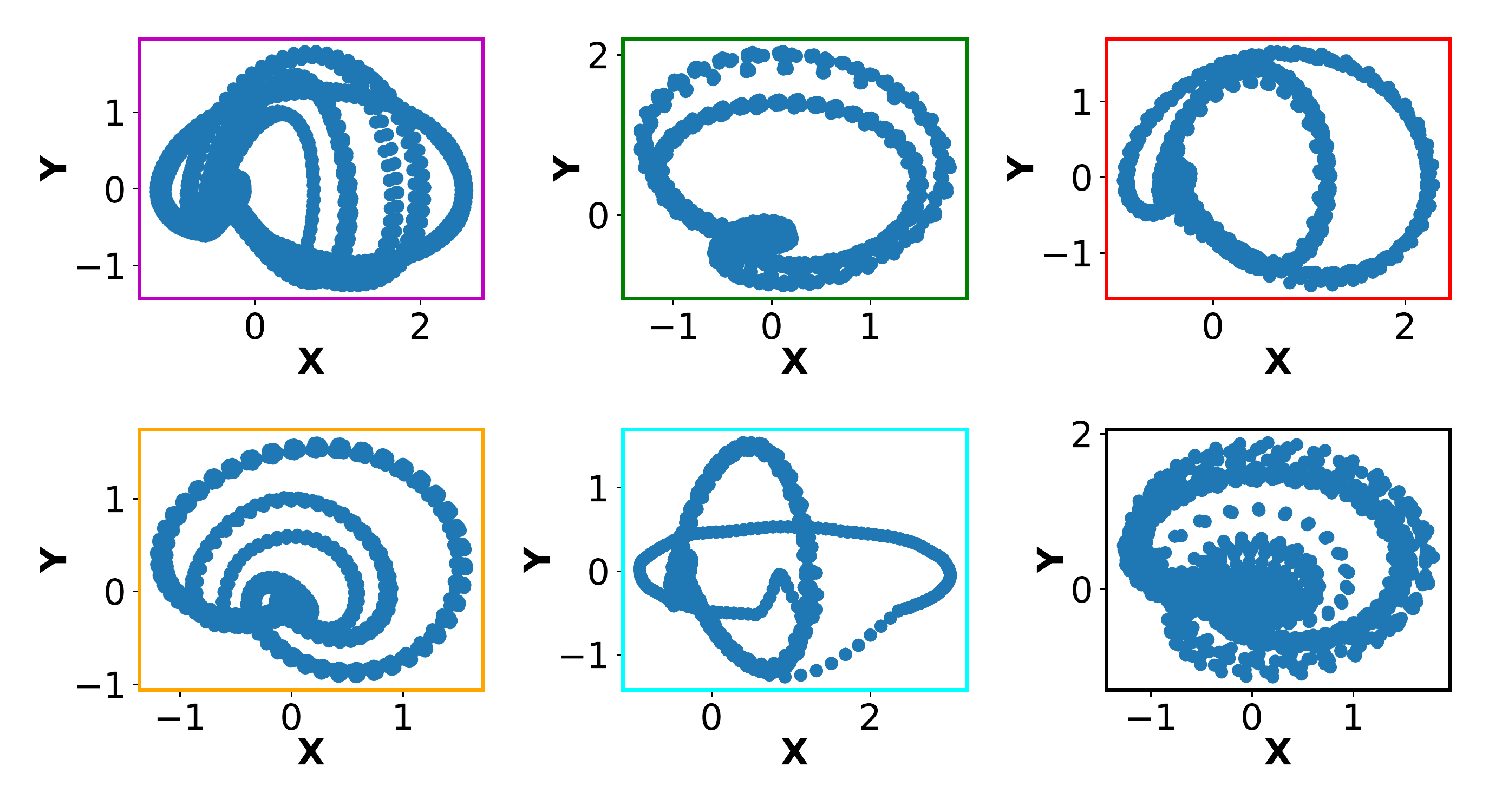}
 \caption{Artificial ECG with arrhythmia. The time series data and the corresponding point clouds are encoded with the same frame color.}
 \label{fig:art_signals}
 \end{figure} 
 

\subsection{Comparison with state-of-the-art}

Each ECG series was split to parts of different size (40,000, 80,000, and 120,000 points). If we take the indexes of the points, whose values are above the separation line calculated in the bootstrap procedure, these points in the original ECG will be the points with the arrhythmia. The PhysioNet dataset has the annotations accompanying the data; therefore, it is possible to compare the predicted labels of the points with the ground truth.

The parameters of the first sliding window have the following values $M s=450$, $s=1$, $\Delta t=2$ ($\Delta t$ is step of moving window), corresponding to the typical ECG sampling parameters, such as those in the MIT-BIH dataset. The size of the second sliding window is equal to $4$ curve loops, it  means that the window separates the series into $2$ parts with $2$ curve loops in each. We chose the confidence level $\alpha$=5\%.

To gauge the performance of the algorithm, we use sensitivity and specificity of the prediction~\cite{Guidi,Tae}. To calculate them we used a hold-out test set comprising the ECG signals with the normal heart beat (160 parts) and the ECG with arrhythmias (192 parts). As a result, the specificity of 86\%, and the sensitivity of 92\% were obtained. We have also calculated the same metrics for the artificial data, and for all types of arrhythmia (42 time series, with arrhythmia in different parts of series). The results are the following: sensitivity 97.2\% with 4.1\% standard deviation; specificity 96.2\% with 3.1\% standard deviation. Optimal choice of prediction threshold and the size of the sliding windows define the trade-off between the high recall and the low false positive rate. 

Comparison of our algorithm against several other approaches is shown in Table 1. We note that the pipeline in Figure 1 was meant to be as simple as possible, providing a robust statistical approach to predict abnormal rhythms in an unsupervised manner with high computational efficiency. Enhancing the pipeline by obvious combination with the deep learning or the hybrid model-based analysis methods is beyond the scope of this paper. Relevant to the clinical approbation, the method was tested (and correctly detected) on the short-episode arrhythmia in the long-term monitoring data stream (Figure 6).

 


\begin{table}
\begin{center}
{\caption{Comparison of proposed approach with state-of-the-art. Definitions of sensitivity and specificity follow those in Ref.~\cite{Tae}.}\label{table2}}
\begin{tabular}{lccccccc}
\hline
\rule{0pt}{12pt}
\rule{0pt}{12pt}
Method&Sens\%&Spec\%&Supervision
\\
\hline
\\[-6pt]
\emph{1}&\textbf{92.0 $\pm$ 4.0}&\textbf{86.0 $\pm$ 6.0}&\textbf{$\Diamond$}\\
\emph{1*}&\textbf{97.2 $\pm$ 4.1}&\textbf{96.2 $\pm$3.1}&\textbf{$\Diamond$}\\
\emph{2}~\cite{Truong2018SelectiveRO,killick2012optimal}&91.6&77.0&$\Diamond$\\
\emph{2*}~\cite{Truong2018SelectiveRO,killick2012optimal}&88.9&84.1&$\Diamond$\\
\emph{3}~\cite{Adams2007BayesianOC}&92.0&80.6&$\Diamond$\\
\emph{3*}~\cite{Adams2007BayesianOC}&85.8&88.9&$\Diamond$\\
\emph{4}~\cite{Jing}&70.0&98.0&$\bigtriangleup$\\
\emph{5}~\cite{Tae}&99.6&97.8&$\Box$\\
\emph{6}~\cite{Finlay}&84.4&	99.7&$\Box$\\
\emph{7}~\cite{Chazal}&75.9&77.7&$\Box$\\
\emph{8}~\cite{Kawazoe}&97.0&63.0&$\Box$\\
\emph{9}~\cite{Faganeli}&98.1&85.0&$\Box$\\
\\
\hline
\\[-6pt]
\multicolumn{8}{l}{$\Diamond$ Unsupervised\ \
$\bigtriangleup$ Semi-supervised\ \
$\Box$ Supervised}\\
\multicolumn{8}{l}{\emph{1}:~\textbf{Bootstrap}~on~real data,~~\emph{1*}:~\textbf{Bootstrap} on artificial data }\\
\multicolumn{8}{l}{\emph{2}: Ruptures(PELT) on Wasserstein distance data}\\
\multicolumn{8}{l}{\emph{2*}: Ruptures(PELT) on Euclidean distance data}\\
\multicolumn{8}{l}{\emph{3}: BOCP on Wasserstein distance data}\\
\multicolumn{8}{l}{\emph{3*}: BOCP on Euclidean distance data}\\
\multicolumn{8}{l}{\emph{4}: SVM + PCA~~\emph{5}:~2D CNN~~~\emph{6}:~Echo State Network }\\
\multicolumn{8}{l}{\emph{7}: LD QRS- and time interval-based features }\\
\multicolumn{8}{l}{\emph{8}: LR~~~~\emph{9}: DT+Heart rate features}\\ 

\end{tabular}
\end{center}
\end{table}

\begin{figure}[h]
 \centering
 \includegraphics[width=\columnwidth]{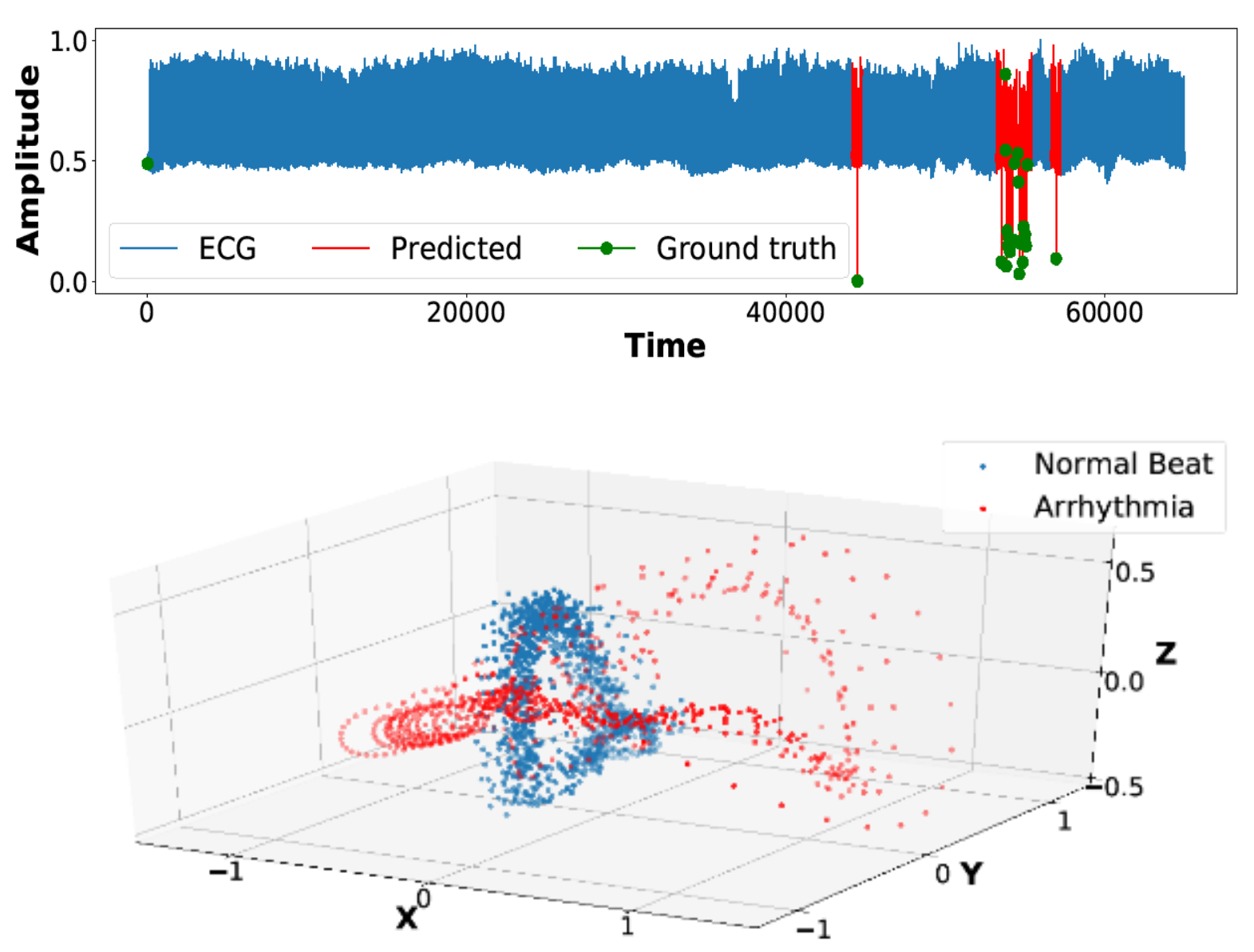}
 \caption{Algorithm's performance on a long-term monitoring data and the real ECG time series. The detected arrhythmia is visible both in the Wasserstein graph and in the 3D point cloud.}
 \label{fig:em_ts}
 \end{figure}
 
 \begin{figure}[h]
 \centering
 \includegraphics[width=\columnwidth]{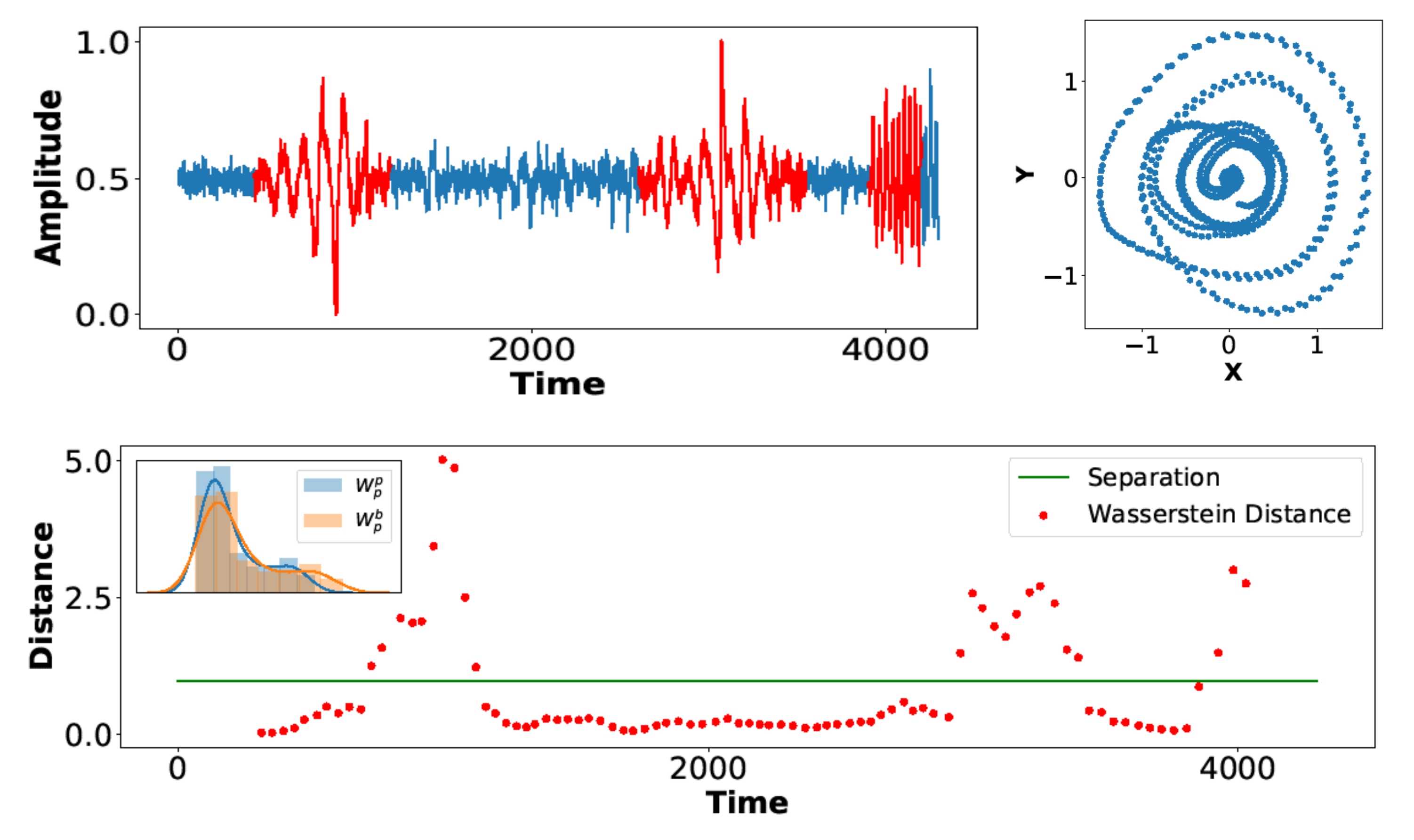}
 \caption{Algorithm's performance on a real Parkinson's disease tremor data. Wasserstein plot and the corresponding point cloud.}
 \label{fig:tremor}
 \end{figure}
 
 \subsection{Other datasets}
 We went beyond ECG, and tested our method on other quasi-periodic physiological signals, yielding the following metrics: neuron spike activity changes\cite{krylov2019RL} were detected with sensitivity of 94.6\% and specificity of 88\%, and a limb tremor data in a patient with intermittent episodes of increased symptoms of parkinsonism\cite{somov2019}(intermittent tremor) -- with sensitivity of 92.3\%, and specificity of 96\%.

 \section{Conclusion}
 We presented a new unsupervised and non-parametric learning algorithm for detection of arrhythmias and of other rhythm anomalies in the raw data of quasi-periodic recordings. The detection relies on optimal transport theory combined with topological analysis and the bootstrap procedure, with the convergence of the bootstrap procedure being proven theoretically. The simple pipeline provides a robust statistical approach to predict abnormal rhythms in an unsupervised manner with high computational efficiency. 
 
 Despite already demonstrating the level of performance of the supervised algorithms, our approach is expected to perform even better if combined with the deep learning methods (similarly to~\cite{Guidi}), especially in a recurrent neural network configuration.  Another line of the future work can entail the extension of the algorithm for the multi-class classification also using the unsupervised bootstrap method on the Wasserstein distances. 

\bibliographystyle{named}
\bibliography{main}

\begin{thebibliography}{}

\bibitem[\protect\citeauthoryear{Adams and MacKay}{2007}]{Adams2007BayesianOC}
Ryan~Prescott Adams and David J.~C. MacKay.
\newblock Bayesian online changepoint detection.
\newblock {\em arXiv}, 0710.3742, 2007.

\bibitem[\protect\citeauthoryear{Alfaras \bgroup \em et al.\egroup
  }{2019}]{Finlay}
Miquel Alfaras, Miguel~C. Soriano, and Silvia Ortín.
\newblock A fast machine learning model for ecg-based heartbeat classification
  and arrhythmia detection.
\newblock {\em Frontiers in Physics}, 7:103, 2019.

\bibitem[\protect\citeauthoryear{Avanesov and Buzun}{2018}]{Buzun2}
Valeriy Avanesov and Nazar Buzun.
\newblock Change-point detection in high-dimensional covariance structure.
\newblock {\em Electronic Journal of Statistics}, 12(2):3254–3294, 2018.

\bibitem[\protect\citeauthoryear{Avanesov}{2019}]{avanesov2019nonparametric}
Valeriy Avanesov.
\newblock Nonparametric change point detection in regression.
\newblock {\em arXiv}, 1903.02603, 2019.

\bibitem[\protect\citeauthoryear{Buzun and Avanesov}{2017}]{Buzun}
Nazar Buzun and Valeriy Avanesov.
\newblock Bootstrap for change point detection.
\newblock {\em arXiv}, 1710.07285, 2017.

\bibitem[\protect\citeauthoryear{Buzun}{2019}]{Buzun1}
Nazar Buzun.
\newblock Gaussian approximation for empirical barycenters.
\newblock {\em arXiv}, 1904.00891, 2019.

\bibitem[\protect\citeauthoryear{Chernozhukov \bgroup \em et al.\egroup
  }{2013}]{Chernozhukov2}
Victor Chernozhukov, Denis Chetverikov, and Kengo Kato.
\newblock Comparison and anti-concentration bounds for maxima of gaussian
  random vectors.
\newblock {\em arXiv}, 1301.4807, 2013.

\bibitem[\protect\citeauthoryear{Chernozhukov \bgroup \em et al.\egroup
  }{2014a}]{Chernozhukov1}
Victor Chernozhukov, Denis Chetverikov, and Kengo Kato.
\newblock Anti-concentration and honest, adaptive confidence bands.
\newblock {\em The Annals of Statistics}, 42(5):1787–1818, Oct 2014.

\bibitem[\protect\citeauthoryear{Chernozhukov \bgroup \em et al.\egroup
  }{2014b}]{Chernozhukov3}
Victor Chernozhukov, Denis Chetverikov, and Kengo Kato.
\newblock Central limit theorems and bootstrap in high dimensions.
\newblock {\em arXiv}, 1412.3661, 2014.

\bibitem[\protect\citeauthoryear{Chernozhukov \bgroup \em et al.\egroup
  }{2014c}]{Chernozhukov}
Victor Chernozhukov, Denis Chetverikov, and Kengo Kato.
\newblock Gaussian approximation of suprema of empirical processes.
\newblock {\em The Annals of Statistics}, 42(4):1564–1597, Aug 2014.

\bibitem[\protect\citeauthoryear{Clifford \bgroup \em et al.\egroup
  }{2006}]{unsupervised}
Gari~D Clifford, Francisco Azuaje, Patrick McSharry, et~al.
\newblock {\em Advanced methods and tools for ECG data analysis}.
\newblock Artech house Boston, 2006.

\bibitem[\protect\citeauthoryear{Cuturi}{2013}]{sinkhorn}
Marco Cuturi.
\newblock Sinkhorn distances: Lightspeed computation of optimal transport.
\newblock In {\em Advances in neural information processing systems}, pages
  2292--2300, 2013.

\bibitem[\protect\citeauthoryear{et. al.}{2019}]{somov2019}
Andrey~Somov et. al.
\newblock Parkinson's disease: analysis of misdiagnosed cases in clinical
  practice.
\newblock {\em submitted}, 2019.

\bibitem[\protect\citeauthoryear{Faganeli and Jager}{2010}]{Faganeli}
J~Faganeli and F~Jager.
\newblock Automatic classification of transient ischaemic and transient
  non-ischaemic heart-rate related {ST} segment deviation episodes in
  ambulatory {ECG} records.
\newblock {\em Physiological Measurement}, 31(3):323--337, feb 2010.

\bibitem[\protect\citeauthoryear{Ghazanfari \bgroup \em et al.\egroup
  }{2019}]{unsuper}
Behzad Ghazanfari, Fatemeh Afghah, Kayvan Najarian, Sajad Mousavi, Jonathan
  Gryak, and James Todd.
\newblock An unsupervised feature learning approach to reduce false alarm rate
  in icus.
\newblock {\em arXiv}, 1904.08495, 2019.

\bibitem[\protect\citeauthoryear{Guidi and Karandikar}{2014}]{Guidi}
Giulia Guidi and Manas Karandikar.
\newblock Classification of arrhythmia using ecg data.
\newblock {\em Lecture notes}, 2014.

\bibitem[\protect\citeauthoryear{Hua \bgroup \em et al.\egroup }{2018}]{Jing}
Jing Hua, Hua Zhang, Jizhong Liu, Yilu Xu, and Fumin Guo.
\newblock Direct arrhythmia classification from compressive ecg signals in
  wearable health monitoring system.
\newblock {\em Journal of Circuits, Systems and Computers}, 27(06):1850088,
  2018.

\bibitem[\protect\citeauthoryear{Huff}{2006}]{UNC}
Jane Huff.
\newblock {\em ECG workout: Exercises in arrhythmia interpretation}.
\newblock Lippincott Williams \& Wilkins, 2006.

\bibitem[\protect\citeauthoryear{Jun \bgroup \em et al.\egroup }{2018}]{Tae}
Tae~Joon Jun, Hoang~Minh Nguyen, Daeyoun Kang, Dohyeun Kim, Daeyoung Kim, and
  Young-Hak Kim.
\newblock Ecg arrhythmia classification using a 2-d convolutional neural
  network.
\newblock {\em arXiv}, 1804.06812, 2018.

\bibitem[\protect\citeauthoryear{Kawazoe \bgroup \em et al.\egroup
  }{2016}]{Kawazoe}
Hiroshi Kawazoe, Yukiko Nakano, Hidenori Ochi, Masahiko Takagi, Yusuke Hayashi,
  Yuko Uchimura, Takehito Tokuyama, Yoshikazu Watanabe, Hiroya Matsumura,
  Shunsuke Tomomori, Akinori Sairaku, Kazuyoshi Suenari, Akinori Awazu, Yosuke
  Miwa, Kyoko Soejima, Kazuaki Chayama, and Yasuki Kihara.
\newblock Risk stratification of ventricular fibrillation in brugada syndrome
  using noninvasive scoring methods.
\newblock {\em Heart Rhythm}, 13(10):1947 -- 1954, 2016.
\newblock Focus Issue: Sudden Death.

\bibitem[\protect\citeauthoryear{Killick \bgroup \em et al.\egroup
  }{2012}]{killick2012optimal}
Rebecca Killick, Paul Fearnhead, and Idris~A Eckley.
\newblock Optimal detection of changepoints with a linear computational cost.
\newblock {\em Journal of the American Statistical Association},
  107(500):1590--1598, 2012.

\bibitem[\protect\citeauthoryear{Krylov \bgroup \em et al.\egroup
  }{2019}]{krylov2019RL}
Dmitriy Krylov, Dmitry~V. Dylov, and Michael Rosenblum.
\newblock Reinforcement learning for suppression of collective activity in
  oscillatory ensembles.
\newblock {\em arXiv}, 1909.12154, 2019.

\bibitem[\protect\citeauthoryear{Kunsch}{1989}]{Kunsch}
Hans~R Kunsch.
\newblock The jackknife and the bootstrap for general stationary observations.
\newblock {\em The annals of Statistics}, pages 1217--1241, 1989.

\bibitem[\protect\citeauthoryear{Lahiri}{2013}]{Lahiri}
Soumendra~Nath Lahiri.
\newblock {\em Resampling methods for dependent data}.
\newblock Springer Science \& Business Media, 2013.

\bibitem[\protect\citeauthoryear{Maglaveras \bgroup \em et al.\egroup
  }{1998}]{MAGLAVERAS1998191}
Nicos Maglaveras, Telemachos Stamkopoulos, Konstantinos Diamantaras, Costas
  Pappas, and Michael Strintzis.
\newblock Ecg pattern recognition and classification using non-linear
  transformations and neural networks: A review.
\newblock {\em International Journal of Medical Informatics}, 52(1):191 -- 208,
  1998.

\bibitem[\protect\citeauthoryear{Moody and Mark}{2001}]{MIT}
George~B Moody and Roger~G Mark.
\newblock The impact of the mit-bih arrhythmia database.
\newblock {\em IEEE Engineering in Medicine and Biology Magazine},
  20(3):45--50, 2001.

\bibitem[\protect\citeauthoryear{Perea and Harer}{2013}]{Perea}
Jose Perea and John Harer.
\newblock Sliding windows and persistence: An application of topological
  methods to signal analysis.
\newblock {\em arXiv}, 1307.6188, 2013.

\bibitem[\protect\citeauthoryear{Perea \bgroup \em et al.\egroup
  }{2015}]{Perea2}
Jose~A Perea, Anastasia Deckard, Steve~B Haase, and John Harer.
\newblock Sw1pers: Sliding windows and 1-persistence scoring; discovering
  periodicity in gene expression time series data.
\newblock {\em BMC bioinformatics}, 16(1):257, 2015.

\bibitem[\protect\citeauthoryear{Perea}{2019}]{Perea1}
Jose Perea.
\newblock Topological time series analysis.
\newblock {\em NOTICES OF THE AMERICAN MATHEMATICAL SOCIETY}, 66(5), 2019.

\bibitem[\protect\citeauthoryear{{Philip de Chazal} \bgroup \em et al.\egroup
  }{2004}]{Chazal}
{Philip de Chazal}, M.~{O'Dwyer}, and R.~B. {Reilly}.
\newblock Automatic classification of heartbeats using ecg morphology and
  heartbeat interval features.
\newblock {\em IEEE Transactions on Biomedical Engineering}, 51(7):1196--1206,
  July 2004.

\bibitem[\protect\citeauthoryear{Sommerfeld and Munk}{2016}]{Sommerfeld}
Max Sommerfeld and Axel Munk.
\newblock Inference for empirical wasserstein distances on finite spaces.
\newblock {\em arXiv}, 1610.03287, 2016.

\bibitem[\protect\citeauthoryear{Sundararajan and Pourahmadi}{2018}]{nonpar}
Raanju~R Sundararajan and Mohsen Pourahmadi.
\newblock Nonparametric change point detection in multivariate piecewise
  stationary time series.
\newblock {\em Journal of Nonparametric Statistics}, 30(4):926--956, 2018.

\bibitem[\protect\citeauthoryear{Truong \bgroup \em et al.\egroup
  }{2018a}]{nonparamreview}
Charles Truong, Laurent Oudre, and Nicolas Vayatis.
\newblock Selective review of offline change point detection methods.
\newblock {\em arXiv}, 1801.00718, 2018.

\bibitem[\protect\citeauthoryear{Truong \bgroup \em et al.\egroup
  }{2018b}]{Truong2018SelectiveRO}
Charles Truong, Laurent Oudre, and Nicolas Vayatis.
\newblock Selective review of offline change point detection methods.
\newblock {\em Signal Processing}, 167, 2018.

\end{thebibliography}
\end{document}